\documentclass{article}

\usepackage{microtype}
\usepackage{graphicx}
\usepackage{subcaption}
\usepackage{booktabs} 
\usepackage[most]{tcolorbox} 

\usepackage{url}
\usepackage{amsthm}
\usepackage{graphicx} 
\usepackage{times}  
\usepackage{helvet}  
\usepackage{courier}  
\usepackage{graphicx} 
\usepackage{caption} 
\usepackage{amsmath}
\usepackage[most]{tcolorbox} 
\usepackage{amssymb}
\usepackage{booktabs}
\usepackage{multirow}
\usepackage{times}
\usepackage{helvet}
\usepackage{courier}
\usepackage{xcolor}
\usepackage{wrapfig}
\usepackage{pifont}
\usepackage{makecell}

\newcommand{\newcrossmark}{\textrm{\ding{55}}}%

\definecolor{myviolet}{RGB}{92,0,103}
\definecolor{myviolet_light}{RGB}{137,130,137}

\usepackage{hyperref}

\usepackage[preprint]{icml2026}

\usepackage{amsmath}
\usepackage{amssymb}
\usepackage{mathtools}
\usepackage{amsthm}

\usepackage[capitalize,noabbrev]{cleveref}

\theoremstyle{plain}
\newtheorem{theorem}{Theorem}[section]

\theoremstyle{definition}

\theoremstyle{remark}

\usepackage[textsize=tiny]{todonotes}

\icmltitlerunning{BiRQA: Bidirectional Robust Quality Assessment for Images}

\begin{document}

\twocolumn[
  \icmltitle{BiRQA: Bidirectional Robust Quality Assessment for Images}



  \icmlsetsymbol{equal}{*}

  \begin{icmlauthorlist}
    \icmlauthor{Aleksandr Gushchin}{isp,iai,msu}
    \icmlauthor{ Dmitriy Vatolin}{isp,iai,msu}
    \icmlauthor{Anastasia Antsiferova}{isp,iai,inno}
  \end{icmlauthorlist}

    \icmlaffiliation{iai}{MSU Institute for Artificial Intelligence, Moscow, Russia}
    \icmlaffiliation{msu}{Lomonosov Moscow State University, Moscow, Russia}
    \icmlaffiliation{isp}{ISP RAS Research Center for Trusted Artificial Intelligence, Moscow, Russia}
    \icmlaffiliation{inno}{Innopolis University, Innopolis, Russia}

    \icmlcorrespondingauthor{Aleksandr Gushchin}{alexander.gushchin@graphics.cs.msu.ru}
    \icmlcorrespondingauthor{Anastasia Antsiferova}{aantsiferova@graphics.cs.msu.ru}

  \icmlkeywords{Machine Learning, ICML}

  \vskip 0.3in
]



\printAffiliationsAndNotice{}  

\begin{abstract}
    Full-Reference image quality assessment (FR IQA) is important for image compression, restoration and generative modeling, yet current neural metrics remain slow and vulnerable to adversarial perturbations.
    We present BiRQA, a compact FR IQA metric model that processes four fast complementary features within a bidirectional multiscale pyramid.
    A bottom-up attention module injects fine-scale cues into coarse levels through an uncertainty-aware gate, while a top-down cross-gating block routes semantic context back to high resolution.
    To enhance robustness, we introduce Anchored Adversarial Training, a theoretically grounded strategy that uses clean "anchor" samples and a ranking loss to bound pointwise prediction error under attacks.
    On five public FR IQA benchmarks BiRQA outperforms or matches the previous state of the art (SOTA) while running $\sim3\times$ faster than previous SOTA models.
    Under unseen white-box attacks it lifts SROCC from 0.30-0.57 to 0.60-0.84 on KADID-10k, demonstrating substantial robustness gains. 
    To our knowledge, BiRQA is the only FR IQA model combining competitive accuracy with real-time throughput and strong adversarial resilience.
\end{abstract}

\section{Introduction}

Image Quality Assessment (IQA) is a fundamental problem in computer vision with applications in image restoration, compression, and generative modeling. Full-Reference (FR) IQA estimates perceived quality by comparing a distorted image with its pristine reference. While classical approaches like PSNR and SSIM are fast, they overlook many complex perceptual details, driving interest in deep learning approaches. 
Yet even strong neural IQA models still face two pressing issues: (i) slow inference speed that limits real-time use, and (ii) high vulnerability to adversarial perturbations, threatening reliability in safety-critical applications.

Adversarial attacks introduce imperceptible perturbations that mislead NN-based IQA models.
Despite recent defenses for FR-IQA, robustness benchmarks show that many metrics remain vulnerable \cite{guardians}, making them unsuitable for domains such as medical imaging, autonomous driving, and content authentication, where scores must remain trustworthy under both adversarial and benign perturbations.
Moreover, these vulnerabilities allow attackers to manipulate image search results, as search engines like Bing rely on IQA metrics for ranking \cite{microsoft_bing_2013}. Such attacks can also falsify public benchmark results~\cite{huang2024vbench, wu2024seesr} by exploiting weaknesses in IQA models and artificially boosting perceived algorithm quality. For example, incorporating a vulnerable IQA metric as a loss function in image restoration can degrade actual image quality~\cite{ding2021comparison} or cause visual artifacts~\cite{kashkarov}. These risks highlight the urgent need for an FR IQA method that combines accuracy with adversarial robustness.

In this work, we present \textbf{BiRQA}: a precise, fast, compact, and attack-resilient FR IQA metric.
The model builds a multiscale feature pyramid and injects feature maps that capture important patterns for human visual system (gradient structure, color dissimilarities, and local binary patterns) into a lightweight neural network. 
Information flows bidirectionally, which is generally a novel concept in IQA: a bottom-up attention lifts fine artifacts with an uncertainty-aware gate that outputs strength and confidence, reducing error propagation across scales. Then a top-down cross-gating supplies global context. This flow reduces scale-specific blind spots, yielding more precise quality scores across unseen distortions.
A reliability-aware aggregation head pools each scale with GeM and combines per-scale contributions via softmax-normalized confidence weights, producing an interpretable convex combination.
Reliability is strengthened through anchor-based adversarial training (AT) that fine-tunes the model to preserve the ranking of adversarial predictions with respect to clean anchors.
Theoretical analysis links the anchor-based optimization objective to a maximal pointwise prediction error.
Extensive experiments on standard IQA benchmarks show that BiRQA achieves superior accuracy while maintaining computational efficiency ($\sim$15 FPS on $1920\times1080$ images). Furthermore, our method generalizes across diverse distortion types and remains resilient to adversarial perturbations, outperforming existing methods in attack scenarios. The key contributions are:
\begin{itemize}
    \setlength\leftskip{0pt} 
    \item A novel FR IQA model architecture \textbf{BiRQA} uses bidirectional, uncertainty-aware cross-scale fusion with interpretable aggregation. The proposed CSRAM (fine$\rightarrow$coarse) and SCGB (coarse$\rightarrow$fine) modules exchange signals through learned gates, while a lightweight head aggregates scales with uncertainty-aware weights. The code will be publicly available.
    \item Theoretically grounded anchored AT uses clean anchors samples and a ranking loss to tighten a prediction error bound, boosting SROCC under attacks by up to 0.30 over the undefended model and 0.05 over the prior defenses.
    \item Extensive experiments on five public FR IQA benchmarks and four unseen white-box attacks show that BiRQA matches or surpasses previous SOTA metrics, runs $\sim3\times$ faster than transformer methods, and improve integral robustness scores by up to $12\%$. 
\end{itemize}


\begin{figure*}[!th]
  \centering
   \includegraphics[width=\linewidth]{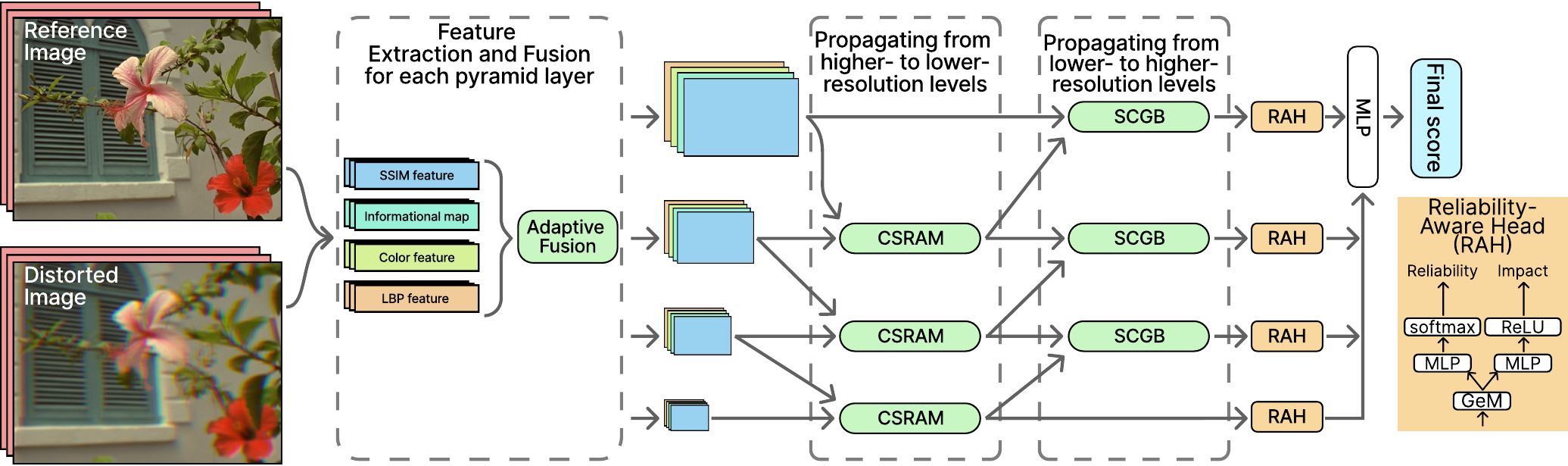}
   \caption{Overall scheme of BiRQA. A reference–distorted pair yields four feature maps per pyramid level. Cross-Scale Residual Attention Module (CSRAM) and Spatial Cross-Gating Block (SCGB) allow the model to pass information in both directions between scales. A Reliability-Aware Head (GeM + dual MLPs) estimates per-level impact and reliability.}
   \label{fig:scheme}
\end{figure*}

\section{Related Work}
\textbf{Full-Reference Image Quality Assessment.}
Assessing the quality of images is critical for numerous applications, such as compression, super-resolution, and other image processing techniques. 
FR IQA methods evaluate perceptual quality by comparing a distorted image to its pristine reference. While PSNR is fast, it correlates poorly with the Human Visual System (HVS). Metrics like SSIM~\cite{ssim}, MS-SSIM~\cite{ms-ssim}, FSIM~\cite{fsim}, SR-SIM~\cite{srsim}, and VIF~\cite{vif} model structure, phase, saliency, or visibility to better match perception at low cost. These methods depend on heuristics and analytic features, ensuring computational efficiency.
Deep learning further improves performance with models like LPIPS (\citealt{lpips}) and DISTS (\citealt{dists}). Transformers extend this via SwinIQA (\citealt{swiniqa}), IQT (\citealt{iqt}), AHIQ (\citealt{ahiq}), and the current state-of-the-art (SOTA) method TOPIQ (\citealt{topiq}), which adopts a multiscale top-down scheme with Cross-Self Attention. 
Several recent models further exploit multiscale attention. SwinIQA and IQT both address cross-scale information flow: SwinIQA relies on a heavy transformer pipeline, whereas proposed BiRQA model integrates lightweight CNN layers with perceptually grounded analytic features to retain speed. IQT passes representations from coarse-to-fine layers, while BiRQA’s Cross-Scale Residual Attention Module (CSRAM) exchanges information bottom-up, yielding a bidirectional interaction that improves detail recovery. 
Beyond multiscale attention, some works explore learning to rank image quality. RankIQA (\citealt{rankiqa}) trains a Siamese network to predict quality for NR IQA. To our knowledge, ranking has not yet been combined with adversarial training in FR IQA, a gap we address through the anchored ranking loss in BiRQA.


\textbf{Robust IQA Methods.}
In FR IQA, an attack adds an imperceptible perturbation that deceives the metric. Attacks are classified as white-box, where gradients and parameters are known, or black-box, where only output scores are available (\citealt{attacks-review}). IQA-specific attacks have been proposed in \citealt{iqa-attack-korhonen, iqa-attack-shumitskaya, facpa, iqa-attack-zhang}.

Defenses divide into certified and empirical categories. While certified defenses offer provable guarantees, they remain too slow for real-time use. Empirical strategies are more practical and often revolve around adversarial training or input purification. 
There are some works focused on NR-IQA, including input purifications (E-LPIPS, \citealt{elpips}), adversarial training (R-LPIPS (\citealt{rlpips}), AT (\citealt{Chistyakova-defense})) and architecture modification (Grad.Norm., \citealt{gradient-norm-defense}).
Although previous AT methods employ data augmentation or gradient regularization, our anchored adversarial training integrates a ranking loss to enhance robustness.

\section{Method}

FR IQA requires four properties still unmet by many deep learning models: (1) \emph{accuracy}, (2) \emph{low latency}, (3) \emph{resilience to adversarial perturbations}, and (4) \emph{sensitivity to multi-scale artifacts}. To meet these goals, we present BiRQA, a compact hybrid network that injects lightweight, human-interpretable feature maps into a bidirectional attention pyramid guided by HVS principles. Figure~\ref{fig:scheme} sketches the pipeline: feature maps are arranged in a four-level pyramid, preserving fine detail at high resolution and summarizing global structure at lower resolutions. At each scale, an \emph{AdaptiveFusion} block reweights channels, a bottom-up \emph{Cross-Scale Residual Attention Module} (CSRAM) lifts fine-scale cues to coarser levels, and a top-down \emph{Spatial Cross-Gating Block} (SCGB) feeds semantic context back to higher resolutions, completing the bidirectional exchange. A Reliability-Aware Head (RAH) pools per-scale representations and aggregates them via normalized reliability weights to produce the final score. Training minimizes a PLCC-oriented regression loss together with an anchored ranking loss.

\subsection{Feature Extraction}
Feature selection was guided by two primary criteria: computational efficiency and the ability to detect complementary types of image degradation. 
To keep model fast we explored various lightweight analytic features rather than building complex image representations from raw images.
We evaluated 11 candidate features, including Gabor filters, wavelet transform, entropy map, edge map, detail loss measure (\citealt{dlm}), VIF, and saliency. A total of 300+ feature combinations were tested. The following four features delivered the best accuracy–runtime trade-off; adding raw images as additional inputs to BiRQA worsened results. Full details are provided in Appendix \ref{app:features}.
The four chosen features address distinct aspects of quality degradation: (1) \textbf{SSIM map}: costs little to compute and provides a spatial implementation of the structural-similarity idea, which is consistent with how people compare distortions. (2) \textbf{Local informational content}: measures the variance of pixel intensities to estimate whether a region is highly informative. The selection of this feature was inspired by IW-SSIM (\citealt{iw-ssim}). (3) \textbf{YCbCr color difference map}: isolates chroma shifts and color bleeding in channels aligned with the HVS. (4) \textbf{Local Binary Patterns (LBP)}: compares each pixel with its neighboring pixels to encode local texture information into binary patterns. This method has proven effective under adversarial attacks (\citealt{lbpAttacks}).

Unlike many recent IQA models that crop images to lower resolutions for faster computation and compatibility with pre-trained backbones, our approach avoids cropping to preserve global context and degradation-specific regions. Instead, we compute feature maps at the original resolution and integrate them into a pyramidal framework. This method uses four pyramid levels, each downscaled by a factor of two from the previous level, enabling the network to capture and aggregate multiscale degradation information effectively.

\begin{figure}[htbp]
    \centering
        \includegraphics[width=\linewidth]{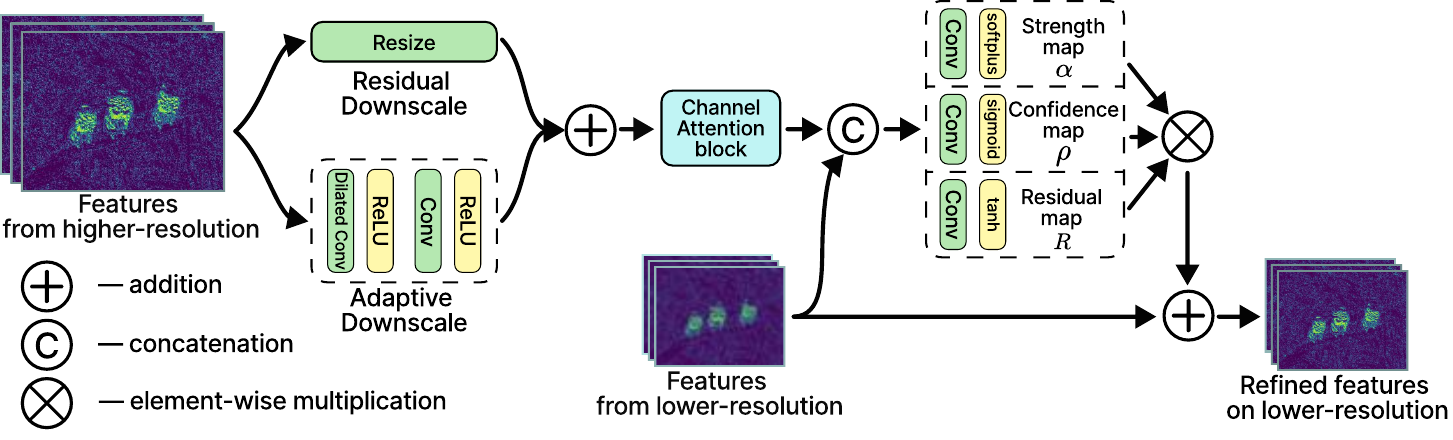}
        \caption{Scheme of the Cross-Scale Residual Attention Module (CSRAM) that lifts high-resolution cues to the next scale and injects them via uncertainty-aware gated residuals (strength $\alpha$, confidence $\rho$, and residual $R$) to refine the lower-resolution features.}
        \label{fig:csram}
\end{figure}

\subsection{BiRQA Network}

Following feature extraction, the BiRQA network processes these features to compute the final quality score. Each feature map at every pyramid level is first preprocessed individually to highlight significant spatial regions. These preprocessed feature tensors we denote as $\{F_i^j\}, F_i^j \in \mathbb{R}^{D_j\times H_i \times W_i}$, where $H_i, W_i$ are feature map dimensions on $i$-th pyramid level, $D_j$ denoting the number of channels for feature $j$.

\textbf{Multi-feature Adaptive Fusion}.
At pyramid level $i$, we concatenate the four features, compute a joint attention vector 
$\alpha_i^j = \sigma\!\bigl(\text{MLP}_i^j(\text{GAP}(F_i^{\,0}\!\oplus\!F_i^{\,1}\!\oplus\!F_i^{\,2}\!\oplus\!F_i^{\,3}))\bigr)$, and obtain the fused tensor
$G_i = \phi_i(\bigoplus_{j=0}^3 \alpha_i^j \odot F_i^j)$, where GAP is global average pooling, $\odot$ -- element-wise multiplication and $\oplus$ -- concatenation, $\phi_i$ -- convolution.
A Squeeze-and-Excitation block (\citealt{hu2018squeeze}) adaptively recalibrates $G_i$ by ``squeezing'' spatial information into a channel descriptor via global pooling and then ``exciting'' (reweighting) each channel through a learned gating mechanism. This allows BiRQA to dynamically emphasize relevant feature channels and suppress noisy or redundant ones, improving robustness at negligible extra cost.

Naive resizing can blur high-frequency artifacts and propagate false positives; we transmit a \emph{gated residual} whose \emph{strength} and \emph{confidence} are learned separately.
We introduce \textbf{Cross-Scale Residual Attention Module} (\textbf{CSRAM}, Fig.~\ref{fig:csram}) to interconnect scales. Fine-scale artifacts arise first; CSRAM lifts their cues upward while controlling reliability. We form a message $\hat G_{i+1}=\operatorname{Conv}{\downarrow}(G_i)+\operatorname{Resize}{\downarrow}(G_i)$. A channel attention block computes a spatial mask via channel pooling $M_{i+1}=\sigma\bigl(\operatorname{AvgPool}(\hat G_{i+1})+\operatorname{MaxPool}(\hat G_{i+1})\bigr)$, and refine the message: $\hat{G}_{i+1}\leftarrow \hat{G}_{i+1}\odot M_{i+1}$. From $z_{i+1}=[G_{i+1},M_{i+1}]$, we use three $1{\times}1$ projections $\psi$ that yields: (i) a nonnegative injection \emph{strength} map $\alpha_{i+1}=\operatorname{softplus}(\psi_\alpha(z_{i+1}))$, (ii) a bounded injection \emph{confidence} map $\rho_{i+1}=\sigma(\psi_\rho(z_{i+1}))$, and (iii) feature map $R_{i+1}=\text{tanh}(\psi_R(\hat{G}_{i+1}))$. Here, $\operatorname{softplus}$ enforces nonnegativity and $\tanh$ limits residual energy for stability. The final update is $G_{i+1}\leftarrow G_{i+1}+(\rho_{i+1}\odot \alpha_{i+1})\odot R_{i+1}$. This uncertainty-aware gating mechanism is, to our knowledge, novel technique in FR IQA; it is parameter-light, tolerates small misalignments, and exposes interpretable reliability maps.

\textbf{Spatial Cross-Gating Block (SCGB).} Inspired by \cite{cover}, SCGB routes coarse context downward to suppress spurious high-resolution noise. For adjacent scales $i$ (fine) and $i{+}1$ (coarse), we form $g_{i\leftarrow i+1}=\sigma(\operatorname{MLP}(G_{i+1}))$ and refine $G_i \leftarrow G_i + G_i \odot g_{i\leftarrow i+1}$. Together, upward CSRAM and downward SCGB provide a two-way highway that transfers only distortion evidence relevant to perceived quality.

\textbf{Reliability-Aware Scale-Wise Fusion.} BiRQA produces multi-scale feature tensors after SCGB, yet common fusion schemes (concatenations/MLPs or gating) hide cross-scale contributions and may be unstable across datasets. We introduce a light additive scale aggregation head that makes per-scale contributions explicit and learnable while keeping runtime overhead minimal. Post-SCGB tensors ${G_i}\in\mathbb{R}^{C\times H_i\times W_i}$ are fed into \textbf{Reliability-Aware Head (RAH)}: each scale is pooled to $z_i=\operatorname{GeM}(h_i(G_i))\in\mathbb{R}^d$, where $h_i$ is a $1 \times 1$ conv to a shared width $d$, and GeM is generalized-mean pooling with learnable exponent $p$. Two tiny MLPs produce a contribution $c_i\in\mathbb{R}$ and a reliability logit $a_i\in\mathbb{R}$. Denoting $S$ as the number of scales, we form normalized gates $w_i=\operatorname{softmax}(a_0,\ldots,a_{S-1})$ to obtain $\hat y=\sum_{i=0}^{S-1} w_ic_i$, a convex, interpretable aggregation that is stable across datasets and cheap to compute.

\subsection{Anchored Adversarial Training}\label{sec:adv_train}

We propose an adversarial fine-tuning strategy, AAT, that leverages adversarial examples without directly penalizing quality labels. 
Our approach relies on the assumption that adversarial perturbations are either imperceptible or only slightly visible, which is the most common case in real-world adversarial scenarios. 
A key idea is to leverage the availability of clean examples (images with reliably predicted quality) as ``anchors`` for the training process.

\textbf{Problem Formulation.} Image-processing systems increasingly operate in untrusted environments where imperceptible perturbations can manipulate quality scores.  
We formalize an adversary that adds an \(\ell_p\)-bounded noise \(\delta\) (\(\lVert\delta\rVert_p\!\le\!\epsilon\)) to the distorted image \(x_d\) of a pair \((x_r,x_d)\).  
When higher scores denote better quality the attacker seeks to maximize \(f_{\theta}(x_r,x_d+\delta)\); if lower scores indicate higher quality, the attacker aims to decrease it. The same approach could do the opposite task, as demonstrated in \cite{antsiferova2024comparing}.
This focus does not restrict the generality of our study, as the principles apply symmetrically.
Formally, we define an adversarial attack as
\begin{equation}\label{eq:adv_obj}
   \max_{\lVert\delta\rVert_p\le\epsilon} \; f_{\theta}(x_r,\;x_d+\delta).
\end{equation}

For a given model $f_\theta$; training data $D$ containing image pairs with associated quality label $y$; and a loss function $\mathcal{L}$ of the model, vanilla adversarial training is a min-max optimization problem(\cite{Chistyakova-defense}):
\begin{equation}
\begin{aligned}\min\limits_{\theta} \mathbb{E}_{(x_r, x_d, y) \sim D} \Big[\max\limits_{\|\delta\|_p \leq \epsilon}\mathcal{L}(f_\theta(x_r, x_d + \delta), y)\Big].\end{aligned}
\label{eq:adv}
\end{equation}
The inner maximization generates strong adversarial examples $x_d+\delta$, while the outer minimization adjusts the model parameters $\theta$ to improve robustness.

In image classification, the ground-truth label is unaffected by adversarial perturbations. Quality scores, however, \emph{do change} slightly with perceptual content, so directly re-using \eqref{eq:adv} creates a label-shift.  Prior work tackles this by penalizing or rescaling the label~(\citealt{Chistyakova-defense}), but this brings two practical obstacles:  
(i) subjective studies follow diverse protocols, so matching MOS (Mean Opinion Score) scales must be repeated for every dataset;  
(ii) the metric that drives penalization can itself be vulnerable, enabling attack transfer.



\section{Pointwise Error Bound via Anchored Ranking Loss}
\label{app:loss_and_bounds}

For a mini-batch of size $N$, let $y=(y_1,\dots,y_N)\in\mathbb{R}^N$ be MOS labels (higher is better) and
$\tilde y=(\tilde y_1,\dots,\tilde y_N)$ be the corresponding model predictions.
Let $\mathcal{S}\subseteq\{1,\dots,N\}$ be the set of anchor indices. Anchor elements will not be attacked, and, thus, will have a reliable model predictions.
Define the batch MOS range $R$ and maximum point-wise error $E := \|\tilde y-y\|_\infty = \max_{j\in[N]} |\tilde y_j-y_j|$. We also use $(t)_+ := \max\{0,t\}$.


For each non-anchor $j\notin\mathcal{S}$ choose nearest anchor elements
\[
i^-(j)\in\arg\max_{i\in\mathcal{S}: y_i\le y_j} y_i,
\qquad
i^+(j)\in\arg\min_{i\in\mathcal{S}: y_i\ge y_j} y_i,
\]
and define one-sided ranking violations
\[
v_j^+ := (\tilde y_j-\tilde y_{i^+(j)})_+,\qquad
v_j^- := (\tilde y_{i^-(j)}-\tilde y_j)_+.
\]
The per-sample loss and its batch max are
\begin{equation}
\ell^{\mathrm{near}}_j:=\frac{1}{R}\max\{v_j^+,v_j^-\},
L_{\mathrm{anchor}}(y,\tilde y):=\max_{j\notin\mathcal{S}}\ell^{\mathrm{near}}_j.
\label{eqq:anchor_loss}
\end{equation}


\begin{tcolorbox}[
  colback=myviolet!5!white,
  colframe=myviolet!75!black,
  title=Theorem 1: Pointwise Error Bound via Anchored Ranking Loss
]
\label{thm:anchor-pointwise-main}
\paragraph{Assumptions.}
\begin{enumerate}
\item \textbf{Anchor accuracy.} For all $i\in\mathcal{S}$, $|\tilde y_i-y_i|\le \varepsilon$.
\item \textbf{Two-sided coverage.} For each $j\notin\mathcal{S}$, there exist anchors $i^-(j),i^+(j)\in\mathcal{S}$ such that
\[
y_{i^-(j)} \le y_j \le y_{i^+(j)},
|y_j-y_{i^\pm(j)}| \le \eta.
\]
\item \textbf{Max-hinge control.} The anchored ranking loss satisfies $L_{\mathrm{anchor}}(y,\tilde y)\le \delta$.
\end{enumerate}
Under these assumptions, the $E$ is bounded by
\begin{equation}
E \;\le\; \varepsilon \;+\; \eta \;+\; R\,\delta.
\end{equation}
\end{tcolorbox}

\paragraph{Proof sketch.}
For any $j\notin\mathcal{S}$, (iii) gives $v_j^+\le R\delta$ and $v_j^-\le R\delta$, hence
\[
\tilde y_j \le \tilde y_{i^+(j)}+R\delta,\qquad
\tilde y_j \ge \tilde y_{i^-(j)}-R\delta.
\]
Using (i) to replace $\tilde y_{i^\pm(j)}$ by $y_{i^\pm(j)}\pm\varepsilon$ and (ii) to relate $y_{i^\pm(j)}$ to $y_j\pm\eta$
yields $|\tilde y_j-y_j|\le \varepsilon+\eta+R\delta$ for all non-anchors; anchors satisfy $|\tilde y_i-y_i|\le\varepsilon$ by (i).
Taking the maximum over $j$ gives the claim.

\begin{tcolorbox}[
  colback=gray!5!white,
  colframe=gray!75!black,
  title={Numerical example for MOS values in $[0,10]$}
]
MOS labels are in $[0,10]$, so $R=10$. Select anchors approximately uniformly across MOS (e.g., by sorting the batch by MOS and taking MOS quantiles), so that every non-anchor has an anchor within $\eta=0.25$ above and below. Suppose anchors are fitted to within $\varepsilon=0.1$ MOS points, and during optimization we have $L_{\mathrm{anchor}}\le\delta=0.01$.
Then Theorem~1 gives
\[
E \le 0.1 + 0.25 + 10\cdot 0.01 = 0.45.
\]
\end{tcolorbox}

\paragraph{Notes on practicality of the assumptions.}
Theorem~1 is intended to be \emph{operational}: its conditions can be met by batching and optimization choices rather than dataset-specific calibration.
Condition \textbf{A2} (two-sided coverage) is satisfied by sorting a mini-batch by MOS and selecting anchors uniformly across the MOS range (or via MOS quantiles), while ensuring the batch minimum and maximum MOS are included among anchors (to cover boundary points).
Condition \textbf{A1} (anchor accuracy) is enforced directly with a standard regression term on anchors (e.g., $\ell_1$ or $\ell_2$), and is empirically easy to monitor.
Condition \textbf{A3} (max-hinge control) can be approximated in practice by using a top-$k$ surrogate to the max so that the largest violations are explicitly driven down.
No dataset-specific statistics, MOS rescaling, or certified bounds are needed, so the conditions hold for any FR IQA corpus used in practice. Further details on the empirical satisfaction of these assumptions and on loss convergence are provided in Appendix~\ref{app:loss_and_bounds}.


\subsection{Implementation Details}
\label{sec:impl}

For the vanilla BiRQA model we use Adam with $lr=10^{-4}$ and batch size $32$ and the following loss with $\alpha=0.7$:
\begin{equation}
\mathcal{L}(y,\hat y)
= \alpha\, MSE(y,\hat y) - (1-\alpha)\,PLCC(y,\hat y).
\end{equation}

During AAT, each mini-batch is constructed to have bounded label spread $R$ and dense anchor coverage.
Within the batch, a subset of samples is kept clean to serve as anchors, while the remaining samples are adversarially
perturbed. This makes the assumptions of Theorem~1 verifiable in practice: anchor predictions remain accurate (small
$\varepsilon$), and the band construction ensures every non-anchor is bracketed by nearby anchors (small $\eta$).

Using Theorem~1 notation, let $J={j\notin\mathcal S}$ and $\ell_{(1)}\ge\cdots\ge \ell_{(|J|)}$  be $\{\ell^{\mathrm{near}}_j:j\notin \mathcal{S}\}$ sorted in descending order. For AAT we directly optimise a weighted mix of base and anchor loss functions:
\begin{equation}
\mathcal{L}_{\mathrm{AAT}}
= \tfrac{1}{2}\mathcal{L} + \tfrac{1}{2}\mathcal{L}_{\mathrm{anchor}}; \mathcal L_{\text{anchor}}=\frac{1}{k}\sum_{r=1}^{k}\ell_{(r)}.
\label{eq:aat_loss}
\end{equation}
This loss directly targets the quantities appearing in Theorem~1: $\mathcal{L}$ drives anchor error $\varepsilon$, while the mined nearest-anchor hinge drives down the top-k worst violations and approximates $L_{\mathrm{anchor}}$.
The complete set of parameters for BiRQA and AAT is listed in Appendix~\ref{app:params}.

\begin{algorithm}[!h]
\caption{Anchored Adversarial Fine-Tuning (AAT)}
\label{alg:anchor}
\small
\begin{algorithmic}[0]
\REQUIRE Network $f_{\theta}$, training set $\mathcal{D}$, mini-batch size $N$, attack routine $A$, perturbation budget $\epsilon_{\text{adv}}$, band width $R$, anchors per batch $|\mathcal{S}|$, top-$k$ parameter $k$
\WHILE{not converged}
  \STATE Sample mini-batch $\mathcal{B}=\{(x_r^m,x_d^m,y^m)\}_{m=1}^{N}$ from $\mathcal{D}$ as in Sec.~\ref{sec:batch_procedure}
  \STATE Sort $\mathcal{B}$ by $y^m$ and select anchors $\mathcal{S}$ (quantiles + endpoints) \hfill\COMMENT{ensures coverage}
  \FOR{$m=1,\dots,N$}
    \IF{$m\in\mathcal{S}$}
      \STATE $\hat x_d^m \gets x_d^m$ \hfill\COMMENT{anchors remain clean}
    \ELSE
      \STATE $\hat x_d^m \gets A(x_r^m,\,x_d^m;\,f_{\theta},\epsilon_{\text{adv}})$ \hfill\COMMENT{solve~\eqref{eq:adv_obj}}
    \ENDIF
    \STATE $\tilde y^m \gets f_{\theta}(x_r^m,\hat x_d^m)$
  \ENDFOR
  \STATE Compute $\mathcal{L}_{\mathrm{AAT}}$ via~\eqref{eq:aat_loss}\;
  \STATE $\theta \leftarrow \textsc{Adam}\bigl(\theta,\nabla_{\theta}\mathcal{L}_{\mathrm{AAT}}\bigr)$
\ENDWHILE
\end{algorithmic}
\end{algorithm}

\textbf{Mini-batch construction procedure.}
\label{sec:batch_procedure}
In all AAT-BiRQA experiments we fix the number of anchors per mini-batch to $|\mathcal{S}|=8$ with batch size $N=16$. Let $MOS_r=max(MOS)-min(MOS)$ be the dataset MOS range; we set $R := \frac{MOS_r}{10}*\frac{|\mathcal{S}|}{N}=\frac{MOS_r}{20}$. For each batch, we first sample a contiguous MOS band of width $R$: we draw a ``low'' MOS value $y_{low}$ and form the band $[y_{low}, y_{low}+R]$. We then construct the anchors as follows: we always include the samples closest to $y_{low}$ and $y_{low}+R$ as anchors and then select the remaining $|\mathcal{S}| - 2$ anchors uniformly along the MOS axis inside the chosen band. 
By construction, the difference between any two consecutive anchors is at most $\eta$, which satisfies the coverage assumption in Theorem~1. Then we sample the remaining $N-|\mathcal{S}|$ non-anchor samples.

This construction ensures two-sided coverage assumption in Theorem~1: every non-anchor lies between two anchors whose MOS values are within $\eta \approx \tfrac{R}{|\mathcal{S}|-1}$ above/below, so the additive term $\eta$ in Theorem~1 is small by design. Moreover, keeping $R$ fixed across batches avoids large fluctuations in normalization and stabilizes the hinge scale, while anchor optimization focuses training on the top-k (we use $k=4$) worst violations that matter for the pointwise guarantee.

\section{Evaluation setup}
\begin{table}[h]
\centering
\caption{Full-Reference IQA  datasets used in our experiments.}
\begin{center}
\begin{small}
    \resizebox{\linewidth}{!}{
    \begin{tabular}{lclll}
    \toprule
    Dataset & Resolution & Ref. Images & Dist. Images & Ratings \\
    \midrule
    LIVE & $768\times512$ & 29 & 779 & 25k \\
    CSIQ & $512\times512$ & 30 & 866 & 5k \\
    TID2013 & $512\times384$ & 25 & 3,000 & 524k \\
    KADID-10k & $512\times384$ & 81 & 10,125 & 30.4k \\
    PIPAL & $288\times288$ & 250 & 23,200 & 1.13M \\
    PieAPP & $256\times256$ & 200 & 20,280 & 1M+\\
    \bottomrule
    \end{tabular}
    }
\end{small}
\end{center}
\label{tab:datasets}
\end{table}
\textbf{Datasets.}
To compare our approach with current solutions, we provide various experiment results. We conducted intra- and cross-dataset evaluations, thorough robustness comparison, and an ablation study. As shown in Table~\ref{tab:datasets}, we conduct experiments on several public IQA datasets: LIVE~(\citealt{liveDataset}), CSIQ (\citealt{csiqDataset}), TID2013 (\citealt{tid2013Dataset}), KADID-10k (\citealt{kadidDataset}), PIPAL (\citealt{pipalDataset}), and two-alternative forced choice (2AFC) dataset: BAPPS (\citealt{lpips}). When available, we used official splits for train/val/test parts and the mean value for 10 runs on random splits in 6:2:2 proportion. These splits are based on reference images to prevent content leakage.

\textbf{Evaluation Metrics.}
We evaluate performance using two widely accepted correlation metrics for datasets with MOS values: Pearson’s Linear Correlation Coefficient (PLCC) and Spearman’s Rank-Order Correlation Coefficient (SROCC). Both metrics are in the range [-1, 1], with a positive value meaning a positive correlation. 
A larger SROCC indicates a more accurate ranking ability of the model, while a larger PLCC indicates a more accurate fitting ability of the model. We also use a paired bootstrap test (1k resamples) to assess if differences in SROCC are statistically significant.

\textbf{Adversarial Robustness Comparison Methodology.}
We compare the effectiveness of the proposed adversarial training method for the BiRQA (proposed) and LPIPS models to keep consistency with previous works. KADID-10k train and test parts were used for adversarial training and testing, utilizing the Projected Gradient Descent with 10 iterations (PGD-10, \citealt{pgd}) as attack method. The attack budget was $\epsilon=8/255$. We compare adversarial training techniques by SROCC and Integral Robustness Score (IR-Score, \citealt{Chistyakova-defense}) on clean and attacked data.     

The IR-Score assesses the model’s ability to withstand perturbations of varying strengths, as recommended in \cite{carlini2019evaluating}. Adversarial examples were generated with perturbation magnitudes $\epsilon\in \mathcal E =\{2,4,8,10\}/255$. Scores were normalized using min-max scaling and mapped to a unified domain via neural optimal transport to account for distributional differences. 



\section{Results}

\begin{table*}[!t]
    \centering
    \renewcommand{\arraystretch}{1.2}
    \caption{Cross-dataset performance on benchmarks (PLCC and SROCC are reported in corresponding columns for each benchmark). The best values are bolded, and the second best are underlined.}
    \resizebox{\linewidth}{!}{
    \begin{tabular}{lcccccccccccccc}
        \toprule
        \multirow{2}{*}{\textbf{Method}} & \multicolumn{6}{c}{Trained on \textbf{KADID-10k}} & \multicolumn{6}{c}{Trained on \textbf{PIPAL}} \\
        \cmidrule(lr){2-7} \cmidrule(lr){8-13}
        & \multicolumn{2}{c}{LIVE} & \multicolumn{2}{c}{CSIQ} & \multicolumn{2}{c}{TID2013} & \multicolumn{2}{c}{LIVE} & \multicolumn{2}{c}{CSIQ} & \multicolumn{2}{c}{TID2013} \\
        \midrule
        WaDIQaM-FR \cite{wadiqam} & 0.940 & 0.947 & 0.901 & 0.909 & 0.834 & 0.831 & 0.895 & 0.899 & 0.834 & 0.822 & 0.786 & 0.739 \\
        PieAPP \cite{pieapp} & 0.908 & 0.919 & 0.877 & 0.892 & 0.859 & 0.876 & --- & --- & --- & --- & --- & --- \\
        LPIPS-VGG \cite{lpips} & 0.934 & 0.932 & 0.896 & 0.876 & 0.749 & 0.670 & 0.901 & 0.893 & 0.857 & 0.858 & 0.790 & 0.760 \\
        DISTS \cite{dists} & 0.954 & 0.954 & 0.928 & 0.929 & 0.855 & 0.830 & 0.906 & 0.915 & 0.862 & 0.859 & 0.803 & 0.765 \\
        AHIQ \cite{ahiq} & 0.952 & 0.970 & 0.955 & 0.951 & 0.889 & 0.885 & 0.903 & 0.920 & 0.861 & 0.865 & 0.804 & 0.763 \\
        TOPIQ \cite{topiq} & 0.957 & \underline{0.974} 
        & \underline{0.963} & \textbf{0.969} 
        & 0.916 & 0.915 
        & \textbf{0.913} & \textbf{0.939} 
        & 0.908 & 0.908 
        & 0.846 & 0.816 \\
        \midrule
        BiRQA (ours) & \textbf{0.967} & \textbf{0.977} 
        & \textbf{0.966} & \underline{0.967} 
        & \textbf{0.925} & \textbf{0.921} 
        & \underline{0.911} & \underline{0.933} 
        & \textbf{0.913} & \textbf{0.912} 
        & \textbf{0.855} & \textbf{0.824} \\
        AAT-BiRQA (ours) & \underline{0.961} & 0.968 
        & 0.960 & 0.959 
        & \underline{0.917} & \underline{0.916} 
        & 0.907 & 0.921 
        & \underline{0.909} & \underline{0.909} 
        & \underline{0.847} & \underline{0.817} \\
        \bottomrule
    \end{tabular}
    }
    \label{tab:cross_dataset}
\end{table*}

\textbf{Full-Reference Benchmarks.}
Across standard FR-IQA benchmarks (LIVE, CSIQ, TID2013, PieAPP, PIPAL), BiRQA matches or surpasses prior SOTA in both PLCC and SROCC (e.g., LIVE 0.989/0.988, CSIQ 0.981/0.979; see Tab. \ref{tab:fr_benchmarks} in Appendix due to limited space). On PieAPP, correlation is slightly lower, likely due to its broader distortion coverage. Per-distortion analysis shows SROCC $\approx$0.90-0.95 for most categories, with reduced effectiveness on some specific types such as radial geometric transforms.

To assess the generalization capabilities of the proposed model, we performed cross-dataset evaluations. The model was trained on large KADID-10k and PIPAL datasets and tested on LIVE, CSIQ, and TID2013 datasets. The results are presented in Table~\ref{tab:cross_dataset}. The proposed base BiRQA performs comparably to the TOPIQ model and exceeds it in 9 out of 12 experiments, highlighting its robust generalization across diverse datasets. AAT-BiRQA performs slightly worse than vanilla BiRQA, but, nonetheless, keeps up with the SOTA performance. This shows that our adversarial training achieves robustness with negligible cost to normal-case performance -- a key advantage over many defenses.



\textbf{Adversarial Robustness Comparison.}
\begin{table*}[!t]
    \centering
    \caption{Robustness of adversarially trained IQA models on KADID-10k. SROCC is evaluated at $\epsilon=8/255$; IR-Score uses $\epsilon\!\in\!\{2,4,8,10\}/255$. Bold numbers denote the best result for each model. All adversarial variants were trained only with PGD-10.}
    \resizebox{\textwidth}{!}{
    \begin{tabular}{l|ccccc|cccc|c}
        \toprule
        Model & \multicolumn{5}{c}{SROCC$\uparrow$} & \multicolumn{4}{c}{IR-Score$\uparrow$} & Train time \\
         & Clean & FGSM & C\&W & AutoAttack & FACPA & FGSM & C\&W & AutoAttack & FACPA & (min) \\
        \midrule
        base LPIPS 
        & \textbf{0.893} & 0.542 & 0.260 & 0.239 & 0.496 
        & --- & --- & --- & ---
        & \textbf{46}   \\
        R-LPIPS
        & 0.858          & 0.570 & 0.327 & 0.266 & 0.515 
        & 0.541 & 0.403 & 0.385 & 0.507
        & 123  \\
        AT-LPIPS
        & 0.852          & 0.730 & 0.523 & 0.481 & 0.753 
        & 0.722 & 0.596 & 0.510 & 0.613
        & 101  \\
        AAT-LPIPS (ours)
        & 0.860  & \textbf{0.751} & \textbf{0.578} & \textbf{0.532} & \textbf{0.796} 
        & \textbf{0.788} & \textbf{0.649} & \textbf{0.580} & \textbf{0.628}
        & 339 \\
        \midrule
        base TOPIQ
        & \textbf{0.938} & 0.524 & 0.284 & 0.269 & 0.512 
        & --- & --- & --- & ---
        & \textbf{352} \\
        R-TOPIQ
        & 0.879 & 0.533 & 0.305 & 0.314 & 0.509 
        & 0.572 & 0.431 & 0.456 & 0.520
        & 437 \\
        AT-TOPIQ
        & 0.892 & 0.839 & 0.513 & 0.552 & 0.760 
        & 0.763 & 0.615 & 0.550 & 0.611
        & 514 \\
        AAT-TOPIQ (ours)
        & 0.911 & \textbf{0.847} & \textbf{0.584} & \textbf{0.576} & \textbf{0.798} 
        & \textbf{0.801} & \textbf{0.688} & \textbf{0.602} & \textbf{0.633}
        & 542 \\
        \midrule
        base BiRQA
        & \textbf{0.954} & 0.568 & 0.295 & 0.350 & 0.503 
        & --- & --- & --- & ---
        & \textbf{105} \\
        R-BiRQA
        & 0.902 & 0.571 & 0.291 & 0.357 & 0.502 
        & 0.563 & 0.427 & 0.414 & 0.525
        & 218 \\
        AT-BiRQA
        & 0.907 & 0.801 & 0.573 & 0.560 & 0.788 
        & 0.769 & 0.638 & 0.551 & 0.620
        & 205 \\
        AAT-BiRQA (ours)
        & 0.941 & \textbf{0.832} & \textbf{0.610} & \textbf{0.597} & \textbf{0.813} 
        & \textbf{0.810} & \textbf{0.688} & \textbf{0.612} & \textbf{0.650}
        & 259 \\
        \bottomrule
    \end{tabular}
    }
    \label{tab:adv_training}
\end{table*}
We benchmark four variants of three full-reference IQA (LPIPS, TOPIQ and the proposed BiRQA) against common $\ell_\infty$ white-box attacks. Compared methods are:
(1) \textbf{base}: no adversarial training;  
(2) \textbf{R-}: vanilla adversarial training;  
(3) \textbf{AT-}: adversarial training with label smoothing (\citealt{Chistyakova-defense});  
(4) \textbf{AAT-} (ours): Anchored Adversarial Training.  
All AT and AAT models were optimized with a PGD-10 attack of budget $\epsilon\!=\!8/255$.
At test time we evaluate four unseen attacks: FGSM (\citealt{fgsm}), C\&W (\citealt{cw}), AutoAttack (AA, \citealt{autoattack}) and the perceptual FACPA attack (\citealt{facpa}).  
Robustness is measured by SROCC and IR-Score on the KADID-10k dataset.

Table~\ref{tab:adv_training} presents the results, which shows that AAT achieves state-of-the-art robustness and outperforms other approaches by $0.02$-$0.06$ SROCC points and by similar margins on IR-Score. AAT also provides the best SROCC on unperturbed test set compared to other defense methods. Even without defense, BiRQA possesses more robustness compared to both LPIPS and TOPIQ, surpassing them by $0.02-0.03$ in terms of IR-Score. Although AAT requires more time during adversarial fine-tuning phase, it provides the best overall results. More experiments can be found in the Appendix \ref{app:robustness-strenghts}-\ref{app:robustness-2afc}, including experiments on 2AFC datasets under White-Box and Black-Box attacks.

\textbf{Theoretical bounds in practice.}
Our adversarial training uses an anchored ranking loss that ties each perturbed sample to a small set of clean anchor images within the mini-batch. Theorem 1 establishes that as this loss approaches zero, the maximum prediction error on any adversarial example is provably bounded by a small constant. In practice, the empirical errors respect this bound, as shown in Fig. \ref{fig:loss_value}b in Appendix. Moreover, the objective drops below $10^{-2}$ within 500 iterations, indicating stable, fast optimization (see Fig. \ref{fig:loss_value}a in Appendix). \newline

\textbf{Statistical significance} was tested on the PIPAL by a paired bootstrap of SROCC differences (1k resamples).
As Table \ref{tab:delta_srocc} in Appendix shows, BiRQA exceeds every previous FR IQA metric, gaining up to 0.57 SROCC over PSNR and 
a positive but very small $\sim$0.003 over the strongest baselines AHIQ and TOPIQ (statistically significant on PIPAL due to the large sample size, though the margin is small in absolute terms), while the robust variant AAT-BiRQA sacrifices only 0.007.

\textbf{Computational Complexity.}
We assessed the computational efficiency of each IQA model by executing 100 forward passes on 100 random images from the PDAP-HDDS~(\citealt{highdefdataset}) dataset and averaging the resulting runtimes. Experiments were performed on an NVIDIA A100 GPU (80 GB). 
Measurements are end-to-end: every operation required by model is considered, including the feature calculation for BiRQA.

\begin{figure}[h] 
  \centering
  \includegraphics[width=0.49\textwidth]{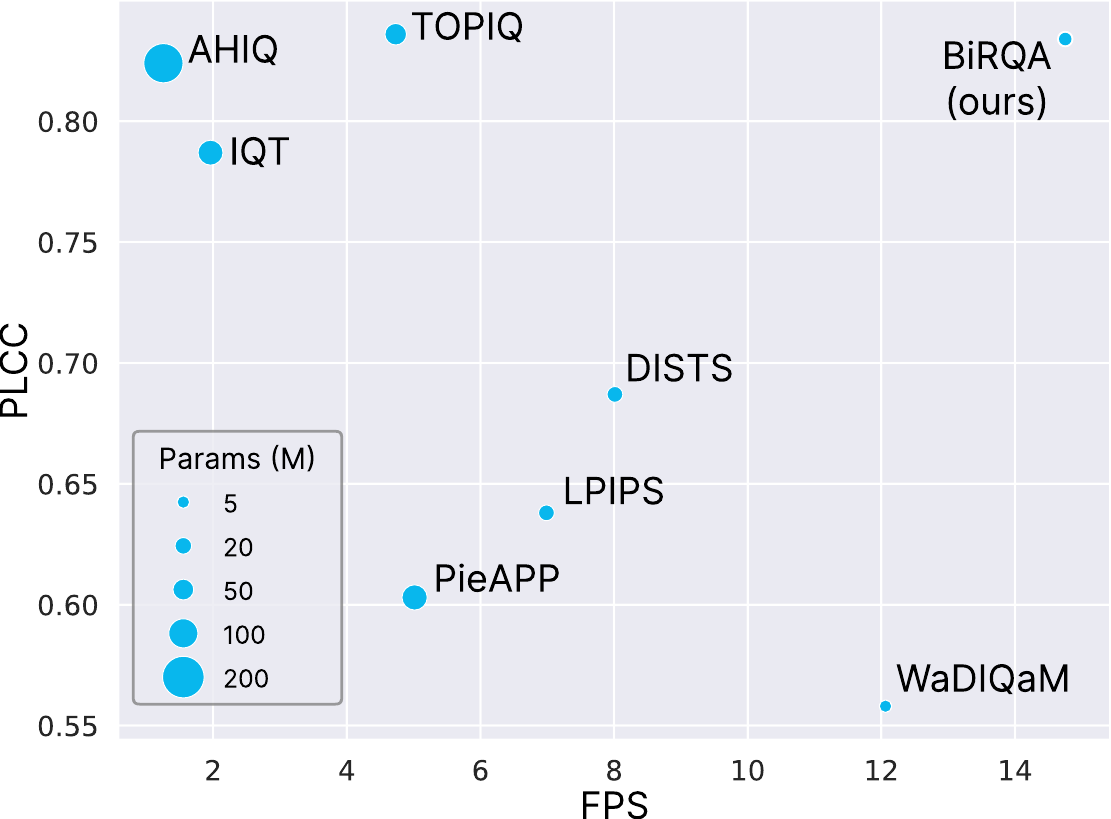}
  \caption{Computational efficiency (FPS) vs. Performance (PLCC) comparison on PDAP-HDDS dataset with image size of $1920 \times 1080$ pixels. Our model achieves comparable PLCC to SOTA method TOPIQ, while being $\sim3$ times faster and having a notably smaller number of parameters. 
  }
  \label{fig:fps}
\end{figure}

Figure \ref{fig:fps} summarizes the trade-off between accuracy and efficiency. Our method achieves performance on par with TOPIQ while running substantially faster than all competing approaches. Figure \ref{fig:fps} also reports the number of parameters for each model. BiRQA has 5.5M parameters, which is smaller than most of its counterparts, including LPIPS and TOPIQ.

\textbf{Ablation Study and Feature Selection.}
We exhaustively trained 231 candidate models covering every combination of 1-, 2-, and 3-feature sets drawn from an 11-feature pool on KADID-10k. Each candidate was scored by a Pareto trade-off between SROCC and inference speed.
We dropped the three weakest features and evaluated all 70 four-feature sets from the remaining eight. Validation SROCC flattened at four features; adding a fifth would increase inference cost without accuracy gains. The final combination (SSIM, Informational Map, Color Difference, and LBP) shows the best SROCC while remains relatively fast and captures complementary structure, information content, chromatic shifts, and fine texture cues. A model that uses only raw image pairs outperforms any single analytic feature map, but adding raw image input to the chosen four lowers SROCC, indicating the Color Difference map already embeds the raw signal.
Complete ablations can be found in Appendix \ref{app:features}.

Table~\ref{tab:ablation} presents results on the KADID-10k dataset for the variation of the BiRQA model. Enabling CSRAM lifts correlations markedly, highlighting that cross-scale interchange with uncertainty-aware gating is crucial for capturing fine artifacts without losing global context. Adding SCGB then activates fully bidirectional information flow between scales and yields a further, consistent improvement, while the reliability-aware head consolidates these signals and sharpens calibration. 

\begin{table}[h]
    \centering
    \caption{Ablation study for BiRQA model. The CSRAM and SCGB modules were replaced with cross-attention layers and element-wise multiplication. The Reliability-Aware Head (RAH) module was replaced with pooling and MLP.}
    \begin{tabular}{ccccc}
        \toprule
        CSRAM & SCGB & RAH & PLCC & SROCC \\
        \midrule
        \newcrossmark & \newcrossmark & \newcrossmark & 0.801 & 0.813 \\
        \checkmark & \newcrossmark & \newcrossmark & 0.907 & 0.911 \\
        \checkmark & \checkmark & \newcrossmark & 0.925 & 0.928 \\
        \checkmark & \checkmark & \checkmark & 0.938 & 0.942\\
        \bottomrule
    \end{tabular}
    \label{tab:ablation}
\end{table}

\section{Conclusion}

In this work, we introduce BiRQA, a novel Full-Reference IQA metric that balances state-of-the-art performance while maintaining computational efficiency. It combines analytic feature maps with a lightweight neural network architecture, that incorporates a multi-scale pyramid framework, adaptive fusion and cross-scale attention mechanisms. This design captures multi-scale perceptual differences, achieving an inference speed of 15 FPS on $1920\times1080$ images, surpassing many counterparts. Extensive evaluations confirm that BiRQA matches or exceeds leading methods. 

To address the critical challenge of adversarial robustness, we propose Anchored Adversarial Training (AAT) with an anchored-ranking loss and prove a mini-batch pointwise-error bound under mild assumptions. Empirically, the anchored loss drops quickly and remains low, and prediction errors stay within the theoretical bound. AAT delivers clear robustness gains over other defenses across four unseen attacks, with only a small drop on clean data. 
The trends persist when summarized by IR-Score across budgets. Limitations include weaker performance on some distortions (e.g., radial geometry). We hope these findings encourage further work into robust, efficient IQA models.

\section*{Impact Statement}

This paper presents work whose goal is to advance the field of Machine
Learning. There are many potential societal consequences of our work, none
which we feel must be specifically highlighted here.

\bibliography{example_paper}
\bibliographystyle{icml2026}

\newpage
\appendix
\onecolumn
\section*{Appendix}
\section{Proof of Theorem 1}
\label{app:proof}
\label{app:anchor-pointwise}

\paragraph{Notation.}
Let $y=(y_1,\dots,y_N)\in\mathbb{R}^N$ be MOS (higher is better) for a mini-batch of $N$ samples, and
$\tilde y=(\tilde y_1,\dots,\tilde y_N)$ be the corresponding model predictions.
Let $\mathcal{S}\subseteq\{1,\dots,N\}$ be the \emph{anchor indices} and denote $M:=|\mathcal{S}|$.
Define the MOS range
\[
R := \max_{i,j\in[N]} |y_i-y_j|\quad (>0),
\qquad
E := \|\tilde y-y\|_\infty = \max_{j\in[N]} |\tilde y_j-y_j|.
\]

\paragraph{Two-sided anchor coverage.}
Assume that for every $j\notin\mathcal{S}$ we can choose (possibly non-unique) anchors
$i^-(j),i^+(j)\in\mathcal{S}$ such that
\begin{equation}\label{eq:two-sided-coverage}
y_{i^-(j)} \le y_j \le y_{i^+(j)},
\qquad
y_j-y_{i^-(j)} \le \eta,
\qquad
y_{i^+(j)}-y_j \le \eta,
\end{equation}
for some $\eta\ge 0$. (In practice, $i^-(j),i^+(j)$ can be chosen as the nearest anchors in MOS below/above $y_j$.)

\paragraph{Max-near anchored hinge loss.}
For each non-anchor sample $j\notin\mathcal{S}$ define the \emph{nearest-anchor violation magnitudes}
\[
v_j^+ := (\tilde y_j-\tilde y_{i^+(j)})_+,
\qquad
v_j^- := (\tilde y_{i^-(j)}-\tilde y_j)_+,
\]
and the (normalized) per-sample loss
\[
\ell^{\mathrm{near}}_j(y,\tilde y) := \frac{1}{R}\max\{v_j^+,v_j^-\}.
\]
We then aggregate by the max over the batch
\[
L_{\mathrm{anchor}}(y,\tilde y) := \max_{j\notin\mathcal{S}} \ell^{\mathrm{near}}_j(y,\tilde y).
\]
(Anchors can be excluded from the max; their accuracy is controlled separately.)

\begin{theorem}[Pointwise $\ell_\infty$ bound from max-near anchored hinge]\label{thm:max-near-pointwise}
Assume:
\begin{enumerate}
\item[\textbf{(i)}] \textbf{Anchor accuracy:} $\ |\tilde y_i-y_i|\le \varepsilon$ for all $i\in\mathcal{S}$.
\item[\textbf{(ii)}] \textbf{Two-sided coverage:} the maps $i^-(j),i^+(j)$ exist and satisfy \eqref{eq:two-sided-coverage}
for all $j\notin\mathcal{S}$ (with some $\eta\ge 0$).
\item[\textbf{(iii)}] \textbf{Max-column control:} $L_{\mathrm{anchor}}(y,\tilde y)\le \delta$.
\end{enumerate}
Then
\begin{equation}\label{eq:pointwise-bound}
E \le \varepsilon + \eta + R\,\delta.
\end{equation}
Moreover, if the worst-error index $j^\star\in\arg\max_j|\tilde y_j-y_j|$ happens to be an anchor ($j^\star\in\mathcal{S}$),
then $E\le \varepsilon$.
\end{theorem}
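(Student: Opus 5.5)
The plan is a direct pointwise argument: bound $|\tilde y_j-y_j|$ separately for anchors and non-anchors, then take the maximum over $j\in[N]$.

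\emph{Anchors.} For $j\in\mathcal{S}$, assumption (i) gives $|\tilde y_j-y_j|\le\varepsilon\le\varepsilon+\eta+R\delta$. This also settles the ``moreover'' clause: if $j^\star\in\mathcal{S}$, then $E=|\tilde y_{j^\star}-y_{j^\star}|\le\varepsilon$.

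\emph{Non-anchors: unpacking the loss.} Fix $j\notin\mathcal{S}$. By the definition of $L_{\mathrm{anchor}}$ as a maximum over non-anchors, (iii) yields $\ell^{\mathrm{near}}_j\le\delta$, and hence $\max\{v_j^+,v_j^-\}\le R\delta$. This step uses $R>0$, which is assumed in the notation paragraph. Since $t\le(t)_+$, we obtain the two one-sided inequalities
\[
\tilde y_j\le \tilde y_{i^+(j)}+R\delta,\qquad \tilde y_j\ge \tilde y_{i^-(j)}-R\delta .
\]

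\emph{Non-anchors: chaining.} For the upper side, apply (i) at the anchor $i^+(j)$ to get $\tilde y_{i^+(j)}\le y_{i^+(j)}+\varepsilon$. Then apply (ii), via \eqref{eq:two-sided-coverage}, to get $y_{i^+(j)}\le y_j+\eta$. Together these give $\tilde y_j-y_j\le\varepsilon+\eta+R\delta$. The lower side is symmetric: $\tilde y_{i^-(j)}\ge y_{i^-(j)}-\varepsilon\ge y_j-\eta-\varepsilon$, which gives $y_j-\tilde y_j\le\varepsilon+\eta+R\delta$. Combining the two sides yields $|\tilde y_j-y_j|\le\varepsilon+\eta+R\delta$. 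Maximizing over all $j$ gives \eqref{eq:pointwise-bound}.

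\emph{Main obstacle.} The argument is elementary, so the only real care is consistency of the anchor maps. The $i^\pm(j)$ appearing in the loss must be the same anchors that satisfy the coverage condition (ii). I would therefore fix the maps once, as in (ii), and read the loss with respect to them. This is harmless: the nearest anchors from the main-text definition automatically satisfy (ii) whenever any covering pair exists, since the nearest anchor below or above $y_j$ is at least as close as any other anchor on that side. I would also note that the definition of $R$ matters only through normalization, so the bound holds for any positive normalizer $R$.
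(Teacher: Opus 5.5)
Your proposal is correct and follows the paper's proof essentially step for step. You unpack (iii) into $v_j^\pm\le R\delta$, chain anchor accuracy and coverage on each side for every non-anchor, handle anchors (and the ``moreover'' clause) directly by (i), and maximize over $j$. Your justification via $t\le (t)_+$ is slightly more careful than the paper's, which writes $v_j^+=|\tilde y_j-\tilde y_{i^+(j)}|$ when only the one-sided inequality holds and is needed.
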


\begin{proof}
Fix any $j\notin\mathcal{S}$. From $L_{\mathrm{anchor}}\le \delta$ we have
\[
\max\{v_j^+,v_j^-\}\le R\delta,
\quad\text{hence}\quad
v_j^+ \le R\delta,\ \ v_j^- \le R\delta.
\]
The inequality $v_j^+ = |\tilde y_j-\tilde y_{i^+(j)}| \le R\delta$ implies
$\tilde y_j \le \tilde y_{i^+(j)} + R\delta$.
By anchor accuracy, $\tilde y_{i^+(j)} \le y_{i^+(j)}+\varepsilon$.
By coverage, $y_{i^+(j)} \le y_j+\eta$.
Therefore
\[
\tilde y_j \le (y_j+\eta) + \varepsilon + R\delta
\quad\Rightarrow\quad
\tilde y_j - y_j \le \eta+\varepsilon+R\delta.
\]
Similarly, $v_j^- = |\tilde y_{i^-(j)}-\tilde y_j| \le R\delta$ implies
$\tilde y_j \ge \tilde y_{i^-(j)} - R\delta$.
By anchor accuracy, $\tilde y_{i^-(j)} \ge y_{i^-(j)}-\varepsilon$.
By coverage, $y_{i^-(j)} \ge y_j-\eta$.
Hence
\[
\tilde y_j \ge (y_j-\eta) - \varepsilon - R\delta
\quad\Rightarrow\quad
y_j-\tilde y_j \le \eta+\varepsilon+R\delta.
\]
Combining the two one-sided bounds yields
$|\tilde y_j-y_j|\le \eta+\varepsilon+R\delta$ for all $j\notin\mathcal{S}$.
For anchors $i\in\mathcal{S}$, anchor accuracy gives $|\tilde y_i-y_i|\le\varepsilon$.
Taking the maximum over all $j\in[N]$ gives \eqref{eq:pointwise-bound}.
If the maximizer lies in $\mathcal{S}$, then $E\le\varepsilon$.
\end{proof}



\begin{figure}[!h]
  \centering
   \includegraphics[width=0.65\linewidth]{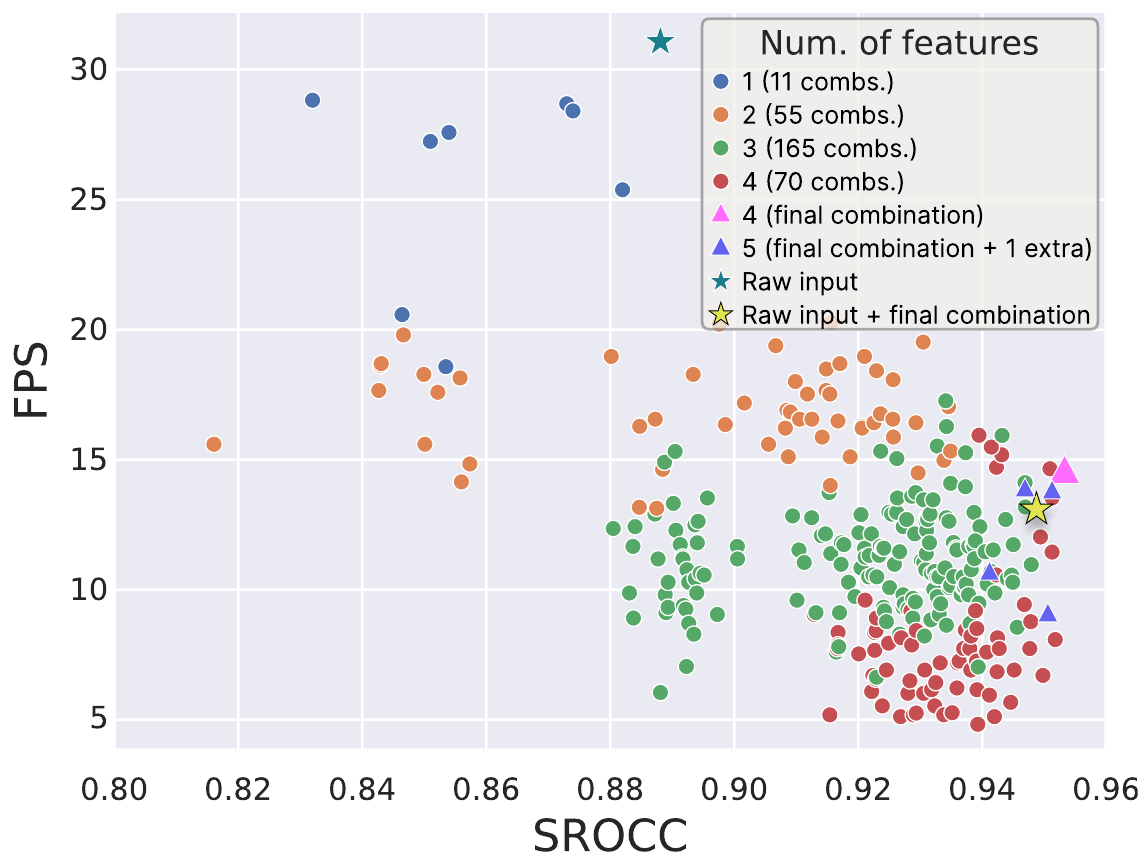}
   \caption{SROCC and inference FPS on KADID-10k for different feature sets. The chosen quartet lies on the Pareto frontier, offering the best accuracy. FPS was measured on images with $1920\times1080$ resolution.}
   \label{fig:features}
\end{figure}

\section{Choice of Features}
\label{app:features}

\begin{table*}[!t]
\centering
\caption{Description of evaluated IQA features}
\renewcommand{\arraystretch}{1.2}
\resizebox{\linewidth}{!}{
\begin{tabular}{l p{11.2cm}}
\toprule
\textbf{Feature} & \textbf{Description / Key Benefit} \\
\midrule
SSIM & Structural Similarity Index compares luminance, contrast and structural components between a reference and a test image, yielding a single score that tracks perceived quality with high correlation to the human visual system (HVS). \\

Informational Map & Generates a spatial weighting map based on local information content (gradient magnitude), giving more influence to perceptually important, detail-rich regions. \\

Color Difference & Computes perceptual color difference in YCbCr color space, making the metric sensitive to chromatic distortions that luminance-only measures may miss. \\

LBP & Local Binary Patterns encode micro-texture by thresholding each neighborhood against its centre pixel; the histogram is gray-scale and rotation robust, providing a compact descriptor of fine texture changes. \\

Gabor Filters & A bank of Gabor kernels isolates edge and texture energy in specific frequency-orientation bands, capturing blur, ringing and other anisotropic artifacts. \\

Wavelet Transform & Discrete wavelet decomposition splits the image into multi-resolution sub-bands; analyzing coefficients across scales localizes blur or compression artifacts while preserving both spatial and frequency information. \\

Entropy Map & Computes local Shannon entropy inside sliding windows; high-entropy areas correspond to regions with greater visual information, enabling quality scores that prioritize complex, information-dense regions. \\

Edge Map & Gradient-based edge extraction detects intensity discontinuities and object boundaries; comparing edge strength between reference and distorted images is effective at spotting blur or sharpening artifacts. \\

\makecell[l]{Detail Loss Measure (DLM)} & Measures the dissimilarity of high-frequency gradients between image pairs, providing a direct quantification of lost fine-scale detail (e.g., due to denoising, compression or over-smoothing). \\

VIF & Visual Information Fidelity models images as Gaussian Scale Mixtures and computes the mutual information lost in the distorted version, grounding the metric in natural-scene statistics and information theory. \\

Saliency & Uses the saliency neural network model to predict likely gaze locations, helping the final metric align with where observers are most likely to look. \\
\bottomrule
\end{tabular}
}
\label{tab:features1}
\end{table*}

We carefully designed a list of possible features for our model, including SSIM, Informational Map, Color Difference, Local Binary Pattern (LBP), Gabor Filters, Wavelet Transform, Entropy Map, Edge Map, Detail Loss Measure (DLM), Visual Information Fidelity (VIF) and Saliency Map. Short descriptions for these features are provided in Table \ref{tab:features1}.

We began with these 11 candidate analytic feature maps and exhaustively evaluated all 1-, 2-, and 3-feature combinations (11, 55, and 165 models, respectively) on KADID-10k (60/20/20 train/val/test split), measuring SROCC and inference speed (FPS). FPS was measured end-to-end (feature computation + the neural network) at $1920\times1080$ resolution on a single NVIDIA A100 80 GB GPU. Each model was trained from scratch under the same architecture and hyperparameters; $\approx$2 GPU-hours per run, totaling $\approx$600 GPU-hours. 
Based on joint accuracy–speed, the three weakest features (Saliency, Wavelet Transform, and Entropy Map) were removed.
From the remaining eight features, we trained all $\binom{8}{4}=70$ four-feature combinations. We did not explore all five-feature sets: their cost is prohibitive for real-time deployment and validation SROCC already saturates at four features (adding any additional feature to the best four-feature set yields no improvement). As a check, we added each of the remaining features to the best four-feature combination and tested them. The resulting SROCC values decreased upon adding a fifth feature, likely due to redundancy. In these experiments we varied only the input features to BiRQA, keeping the architecture fixed.

Figure \ref{fig:features} presents all 300+ evaluated feature combinations.
The selected features SSIM, Informational Map (IM), Color Difference (CD), and LBP are complementary: SSIM captures structural deviations, IM reflects local information content, CD measures chromatic/luminance discrepancies in a perceptually motivated space, and LBP encodes fine-scale texture.
A model using only raw image pairs outperforms any single analytic feature. However, concatenating raw pixels with the four selected features reduces SROCC.
This likely stems from redundancy, because CD already encodes pixel level differences in an alternative color space, resulting in increased input dimensionality under a fixed capacity predictor, which can hurt generalization.

\section{Additional Analysis and Results}
\label{app:add_results}

\subsection{More Results on FR Benchmarks}
\label{app:add_results_fr_benchmarks}

Table~\ref{tab:fr_benchmarks} presents a comprehensive comparison across widely used FR IQA benchmarks. The proposed BiRQA model achieves state-of-the-art or competitive results on most datasets, including LIVE, CSIQ, PieAPP, and the large-scale PIPAL dataset, for both PLCC and SROCC metrics. Notably, BiRQA reaches or surpasses a correlation of 0.98 on LIVE and CSIQ, and outperforms recent transformer-based methods on the more challenging PieAPP and PIPAL datasets while maintaining high efficiency.

AAT-BiRQA, which incorporates our proposed anchored adversarial training scheme, offers slightly lower correlations on clean data due to the regularization effect of adversarial robustness, but still maintains strong overall performance. 
This makes it preferable in safety-critical or attack-prone environments.

BiRQA shows slightly lower scores on TID2013, where it ranks just below the best-performing model (AHIQ). This can be attributed to the peculiar distortion types present in TID2013 such as chromatic aberration, mean shift, and severe radial distortions, that are underrepresented in modern training sets. Additionally, some distortions in TID2013 are known to interact poorly with analytic features (e.g., SSIM and LBP), which may limit BiRQA’s ability to fully capture perceptual degradation in those cases. We hypothesize that deeper fine-tuning or explicit modeling of these artifact types may further improve performance on such legacy datasets.

Overall, BiRQA and its adversarially trained variant AAT-BiRQA demonstrate strong generalization across datasets and distortion types, validating the effectiveness of bidirectional multiscale fusion and anchor-based adversarial training.

\begin{table*}[!t]
    \centering
    \caption{Quantitative comparison with related works on public FR benchmarks, including the traditional LIVE, CSIQ, TID2013 with MOS labels, and recent large-scale datasets PieAPP, PIPAL with 2AFC labels. The best and second results are bold and underlined, respectively, and ``—'' indicates the score is not available or not applicable.}
    \resizebox{0.95\linewidth}{!}{
    \begin{tabular}{lcccccccccccc}
        \toprule
        \multirow{2}{*}{\textbf{Method}} & \multicolumn{2}{c}{\textbf{LIVE}} & \multicolumn{2}{c}{\textbf{CSIQ}} & \multicolumn{2}{c}{\textbf{TID2013}} & \multicolumn{2}{c}{\textbf{PieAPP}} & \multicolumn{2}{c}{\textbf{PIPAL}} \\
        \cmidrule(lr){2-3} \cmidrule(lr){4-5} \cmidrule(lr){6-7} \cmidrule(lr){8-9} \cmidrule(lr){10-11}
        & PLCC & SROCC & PLCC & SROCC & PLCC & SROCC & PLCC & SROCC & PLCC & SROCC \\
        \midrule
        PSNR & 0.865 & 0.873 & 0.819 & 0.810 & 0.677 & 0.687 & 0.135 & 0.219 & 0.277 & 0.249 \\
        SSIM (2004) & 0.937 & 0.948 & 0.852 & 0.865 & 0.777 & 0.727 & 0.245 & 0.316 & 0.391 & 0.361 \\
        MS-SSIM (2003) & 0.940 & 0.951 & 0.889 & 0.906 & 0.830 & 0.786 & 0.051 & 0.321 & 0.163 & 0.369 \\
        VIF (2006) & 0.960 & 0.964 & 0.913 & 0.911 & 0.771 & 0.677 & 0.250 & 0.212 & 0.479 & 0.397 \\
        MAD (2010) & 0.968 & 0.967 & 0.950 & 0.947 & 0.827 & 0.781 & 0.231 & 0.304 & 0.580 & 0.543 \\
        VSI (2014) & 0.948 & 0.952 & 0.928 & 0.942 & 0.900 & 0.897 & 0.364 & 0.361 & 0.517 & 0.458 \\
        \midrule
        DeepQA (2017) & 0.982 & \underline{0.981} & 0.965 & 0.961 & 0.947 & 0.939 & 0.172 & 0.252 & — & — \\
        WaDIQaM (2017) & 0.980 & 0.970 & — & — & 0.946 & 0.940 & 0.439 & 0.352 & 0.548 & 0.553 \\
        PieAPP (2018) & 0.986 & 0.977 & 0.975 & 0.973 & 0.946 & 0.945 & 0.842 & 0.831 & 0.597 & 0.607 \\
        LPIPS-VGG (2018) & 0.978 & 0.972 & 0.970 & 0.967 & 0.944 & 0.936 & 0.654 & 0.641 & 0.633 & 0.595 \\
        DISTS (2020) & 0.980 & 0.975 & 0.973 & 0.965 & 0.947 & 0.943 & 0.725 & 0.693 & 0.687 & 0.655 \\
        JND-SalCAR (2020) & \underline{0.987} & \underline{0.984} & 0.977 & 0.976 & 0.956 & 0.949 & — & — & — & — \\
        AHIQ (2022) & \textbf{0.989} & \underline{0.984} & 0.978 & 0.975 & \textbf{0.968} & \textbf{0.962} & 0.840 & \underline{0.838} & 0.823 & \underline{0.813} \\
        TOPIQ (2024) & 0.984 & \underline{0.984} & \underline{0.980} & \underline{0.978} & 0.958 & 0.954 & \underline{0.849} & \underline{0.841} & \underline{0.830} & \underline{0.813} \\
        \midrule
        BiRQA (ours) & \textbf{0.989} & \textbf{0.988} & \textbf{0.981} & \textbf{0.979} & \underline{0.964} & \underline{0.959} & \textbf{0.852} & \textbf{0.845} & \textbf{0.837} & \textbf{0.822} \\
        AAT-BiRQA (ours) & 0.982 & 0.978 & \underline{0.980} & 0.974 & 0.956 & 0.951 & 0.838 & 0.831 & \underline{0.832} & 0.809 \\
        \bottomrule
    \end{tabular}
    }
    \label{tab:fr_benchmarks}
\end{table*}

\subsection{Anchor-loss Convergence and Theoretical Bounds} 
\label{app:loss_and_bounds}

Figure~\ref{fig:loss_value} (a) traces the anchored-ranking loss $\mathcal{L}_{anchor}$ over the first 1,000 optimization steps (mini-batches). Two curves are shown: the raw batch-wise loss (blue) and an exponential-moving-average with a window of 20 iterations (orange).  By iteration $\sim$650 the smoothed loss drops below $10^{-3}$ and remains there for the rest of training, with only small mini-batch jitter ($<10\%$ relative amplitude). This meets the target $\delta=10^{-3}$ used in Theorem 1, so the theoretical bound on pointwise prediction error is already guaranteed after less than two epochs. The plot confirms that anchored adversarial training converges quickly and stably, delivering the tight loss levels required for the robustness guarantee without extended hyper-parameter tuning.

Figure~\ref{fig:loss_value} (b) compares the theoretical bound of Theorem 1 with the observed maximum pointwise error during AAT fine-tuning on KADID-10k. The empirical curve never exceeds the bound and decays at the same exponential rate, giving concrete evidence that the bound is valid.\newline

\begin{figure*}[!h]
  \centering
   \includegraphics[width=1.\linewidth]{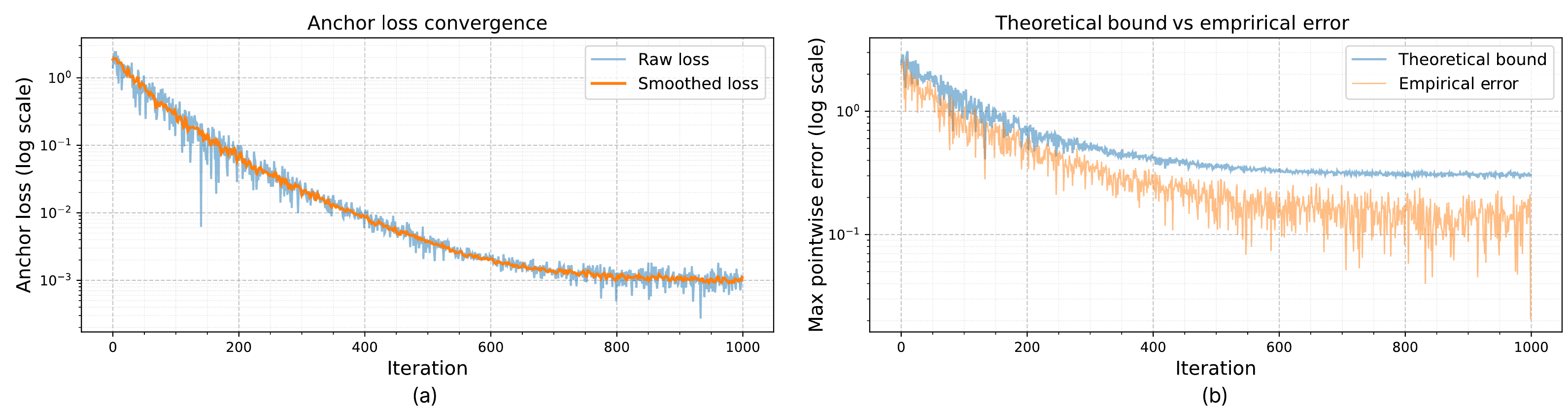}
   \caption{(a): Convergence of the Anchored Ranking Loss over 1,000 iterations. (b): Comparison of bounds, provided by Theorem 1 with empirical values of maximum pointwise errors.}
   \label{fig:loss_value}
\end{figure*}

\subsection{Robustness Under Different Attack Strengths}
\label{app:robustness-strenghts}

Figure \ref{fig:robustness} shows how Spearman rank-order correlation varies with FGSM attack strength on the KADID-10k test set. We evaluate BiRQA, LPIPS and TOPIQ together with their anchored adversarial training versions under five $\ell_\infty$ budgets $\epsilon=\{0,2,4,8,10\}/255$, where $\epsilon=0$ corresponds to clean images. All models are trained on the KADID-10k training set. BiRQA consistently maintains higher correlation than LPIPS and TOPIQ as perturbation strength increases. Anchored adversarial training markedly reduces the performance drop for each metric, with the anchored BiRQA variant retaining the highest SROCC across all budgets.

\begin{figure}[!h]
  \centering
   \includegraphics[width=0.7\linewidth]{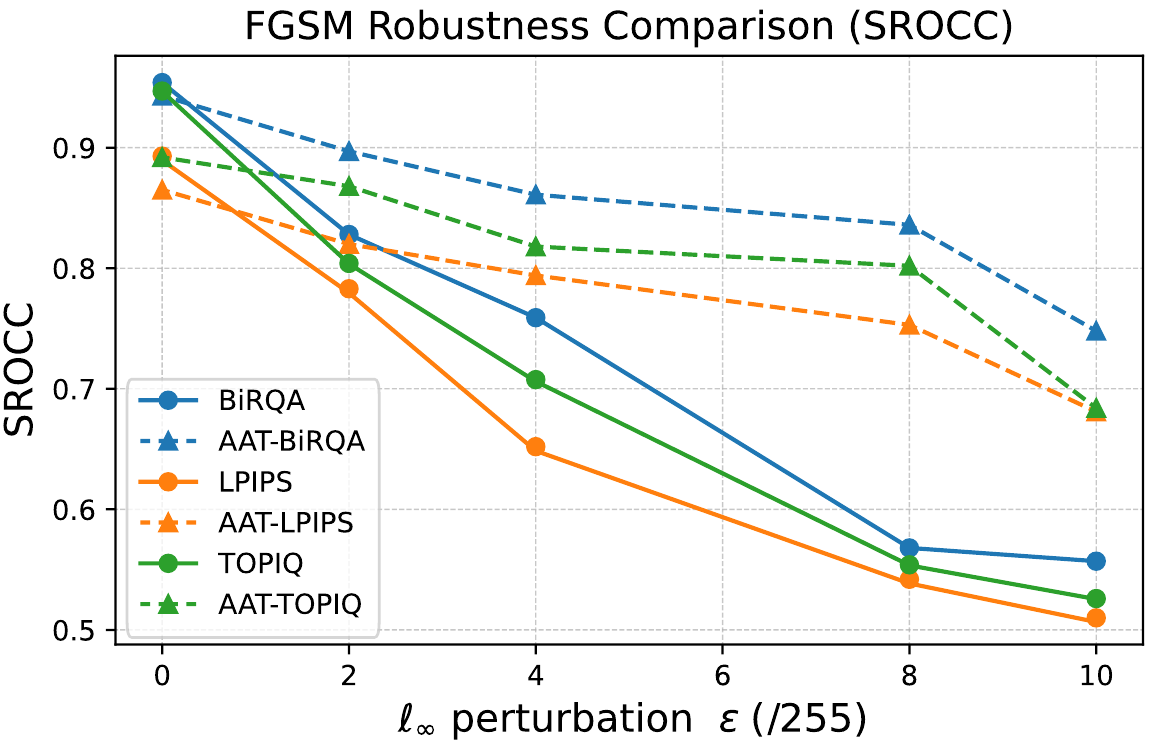}
   \caption{Robustness of FR IQA metrics to FGSM attack with different perturbation budgets evaluated on KADID-10k dataset. Solid lines show SROCC of vanilla models without adversarial training, dashed lines show SROCC of models trained with AAT.
   }
   \label{fig:robustness}
\end{figure}

\subsection{2AFC Datasets Robustness Comparison}
\label{app:robustness-2afc}

We evaluated four variations of three FR IQA models (LPIPS, TOPIQ and BiRQA) on Two Alternative Forced Choice (2AFC) dataset BAPPS. We compare \textbf{AAT} to (1) \textbf{base}: no adversarial training; (2) \textbf{R-}: vanilla adversarial training; (3) \textbf{AT-}: adversarial training with label smoothing. All AT and AAT models were optimized with a PGD-10 attack of budget $\epsilon=8/255$. These models were attacked by seven unseen methods: FGSM, C\&W, AutoAttack (AA), FACPA, the perceptual attack of Zhang et al., SquareAttack and Parsimonious, including 5 White Box and 2 Black Box methods. All attacks during testing were evaluated with a budget of $\epsilon=8/255$. We report accuracy on clean and attacked versions of the BAPPS test set.

Table~\ref{tab:adv_training_appendix} presents the results, which shows that AAT achieves state-of-the-art robustness and outperforms other approaches by $0.02$-$0.1$ accuracy points. AAT also provides the best accuracy on unperturbed test set compared to other defense methods. Even without defense, BiRQA possesses more robustness compared to LPIPS and TOPIQ, surpassing them by $0.03-0.06$ in terms of accuracy. 

\begin{table*}[!h]
    \centering
    \caption{Accuracy on 2AFC BAPPS test set of different adversarial training techniques, which were applied to LPIPS, TOPIQ and BiRQA models. The best results for each model are bolded. PGD-10 with $\epsilon=8/255$ was used during training.}
    \resizebox{\linewidth}{!}{
    \begin{tabular}{l|cccccc|cc}
        \toprule
        Model & & \multicolumn{5}{c}{White-Box Attacks} & \multicolumn{2}{c}{Black-Box Attacks} \\
         & Clean & FGSM & C\&W & AutoAttack & FACPA & Zhang et al. & SquareAttack & Parsimonious \\
        \midrule
        base LPIPS           & \textbf{0.742} & 0.260 & 0.102 &
        0.135 & 0.415 & 0.301 & 
        0.511 & 0.513 \\
        R-LPIPS              & 0.728 & 0.487 & 0.306 &
        0.298 & 0.471 & 0.375 & 
        0.567 & 0.533 \\
        AT-LPIPS             & 0.729 & 0.495 & 0.440 & 
        0.412 & 0.589 & 0.436 & 
        0.615 & 0.586 \\
        AAT-LPIPS (ours)     & 0.736 & \textbf{0.520} & \textbf{0.486} &
        \textbf{0.459} & \textbf{0.632} & \textbf{0.494} & 
        \textbf{0.643} & \textbf{0.618} \\
        \midrule
        base TOPIQ           & \textbf{0.784} & 0.358 & 0.320 &
        0.341 & 0.496 & 0.416 & 
        0.542 & 0.552 \\
        R-TOPIQ              & 0.762 & 0.420 & 0.391 &
        0.350 & 0.523 & 0.459 & 
        0.589 & 0.581 \\
        AT-TOPIQ             & 0.768 & 0.493 & 0.467 &
        0.479 & 0.584 & 0.510 & 
        0.634 & 0.614 \\
        AAT-TOPIQ (ours)     & 0.775 & \textbf{0.526} & \textbf{0.544} &
        \textbf{0.552} & \textbf{0.612} & \textbf{0.546} & 
        \textbf{0.658} & \textbf{0.653} \\
        \midrule
        base BiRQA           & \textbf{0.794} & 0.405 & 0.350 &
        0.376 & 0.521 & 0.462 & 
        0.571 & 0.574 \\
        R-BiRQA              & 0.771 & 0.491 & 0.382 &
        0.417 & 0.545 & 0.526 & 
        0.614 & 0.587 \\
        AT-BiRQA             & 0.771 & 0.573 & 0.473 &
        0.485 & 0.570 & 0.581 & 
        0.662 & 0.672 \\
        AAT-BiRQA (ours)     & 0.782 & \textbf{0.594} & \textbf{0.561} &
        \textbf{0.580} & \textbf{0.637} & \textbf{0.610} & 
        \textbf{0.690} & \textbf{0.703} \\
        \bottomrule
    \end{tabular}
    }
    \label{tab:adv_training_appendix}
\end{table*}

\subsection{Statistical Significance} 
\label{app:stat_tests}

We measure practical improvements by the difference in Spearman rank-order correlation (SROCC):

$$
\Delta\rho_{i,j} = SROCC(y^{MOS}, y_i)-SROCC(y^{MOS}, y_j),
$$ where \(y^{\mathrm{MOS}}\) is the vector of mean-opinion scores, \(i\) and \(j\) index two IQA metrics, and \(y_i,\,y_j\) are their predictions.
For every pair of IQA metrics, we estimate $\Delta\rho$ and its 95\% confidence interval (CI) via a paired non-parametric bootstrap:
We used the PIPAL dataset with a size of $N=23,200$ distorted images for this experiment. From the test set of $N$ image pairs we drew $R=1{,}000$ bootstrap samples of size $N$ with replacement, keeping the MOS vector and the two prediction vectors aligned. On each resample $b\in[1,...,R]$ we computed $\Delta\rho^{(b)}$. The median of $\{\Delta\rho^{(b)}\}$ is reported as the effect size. The 2.5th and 97.5th percentiles form the two-sided 95\% CI. 

An improvement is considered significant when the lower CI bound is positive and $\Delta\rho\ge 0.01$, consistent with recent FR IQA benchmarks. Table \ref{tab:delta_srocc} shows the results. The bootstrap makes no distributional assumptions, accounts for dependence between predictions, and remains valid even when error variance varies across the MOS range.

Table \ref{tab:delta_srocc} shows that BiRQA outperforms every prior FR IQA metric on PIPAL. The gain is dramatic against classical metrics (e.g., +0.567 SROCC over PSNR) and even if small remains statistically significant against the strongest modern baselines (AHIQ/TOPIQ). The robustness-enhanced variant (AAT-BiRQA) sacrifices no more than 0.007 SROCC, confirming that anchored adversarial fine-tuning adds security almost for free.

\begin{table*}[!h]
    \centering
    \caption{Pairwise $\Delta$SROCC (±95\% CI) on PIPAL (N = 23,000 distorted images) for 1,000 paired nonparametric bootstrap resamples (image pairs drawn with replacement; MOS and predictions kept aligned). Positive values favor the row metric; negative values favor the column metric. Only the lower triangle is shown; “—” indicates the symmetric counterpart.}
    \resizebox{\linewidth}{!}{
    \begin{tabular}{lcccccccccc}
    \toprule
     & PSNR & SSIM & MAD & WaDIQaM & LPIPS & DISTS & AHIQ & TOPIQ & \makecell[l]{BiRQA\\(ours)} \\
    \midrule
    PSNR & — & — & — & — & — & — & — & — & — \\
    SSIM & \makecell{$0.103$\\\textcolor{gray}{$\pm0.003$}} & — & — & — & — & — & — & — & — \\
    MAD & \makecell{$0.281$\\\textcolor{gray}{$\pm0.005$}} & \makecell{$0.178$\\\textcolor{gray}{$\pm0.001$}} & — & — & — & — & — & — & — \\
    WaDIQaM & \makecell{$0.292$\\\textcolor{gray}{$\pm0.007$}} & \makecell{$0.189$\\\textcolor{gray}{$\pm0.001$}} & \makecell{$0.011$\\\textcolor{gray}{$\pm0.001$}} & — & — & — & — & — & — \\
    LPIPS & \makecell{$0.336$\\\textcolor{gray}{$\pm0.007$}} & \makecell{$0.232$\\\textcolor{gray}{$\pm0.001$}} & \makecell{$0.054$\\\textcolor{gray}{$\pm0.003$}} & \makecell{$0.044$\\\textcolor{gray}{$\pm0.001$}} & — & — & — & — & — \\
    DISTS & \makecell{$0.399$\\\textcolor{gray}{$\pm0.006$}} & \makecell{$0.295$\\\textcolor{gray}{$\pm0.001$}} & \makecell{$0.117$\\\textcolor{gray}{$\pm0.006$}} & \makecell{$0.107$\\\textcolor{gray}{$\pm0.004$}} & \makecell{$0.063$\\\textcolor{gray}{$\pm0.002$}} & — & — & — & — \\
    AHIQ & \makecell{$0.564$\\\textcolor{gray}{$\pm0.017$}} & \makecell{$0.460$\\\textcolor{gray}{$\pm0.014$}} & \makecell{$0.282$\\\textcolor{gray}{$\pm0.014$}} & \makecell{$0.272$\\\textcolor{gray}{$\pm0.012$}} & \makecell{$0.228$\\\textcolor{gray}{$\pm0.009$}} & \makecell{$0.165$\\\textcolor{gray}{$\pm0.004$}} & — & — & — \\
    TOPIQ & \makecell{$0.565$\\\textcolor{gray}{$\pm0.016$}} & \makecell{$0.461$\\\textcolor{gray}{$\pm0.014$}} & \makecell{$0.283$\\\textcolor{gray}{$\pm0.015$}} & \makecell{$0.273$\\\textcolor{gray}{$\pm0.013$}} & \makecell{$0.229$\\\textcolor{gray}{$\pm0.010$}} & \makecell{$0.166$\\\textcolor{gray}{$\pm0.004$}} & \makecell{$0.001$\\\textcolor{gray}{$\pm0.000$}} & — & — \\
    \makecell[l]{BiRQA\\(ours)} & \makecell{$0.567$\\\textcolor{gray}{$\pm0.014$}} & \makecell{$0.463$\\\textcolor{gray}{$\pm0.014$}} & \makecell{$0.285$\\\textcolor{gray}{$\pm0.013$}} & \makecell{$0.275$\\\textcolor{gray}{$\pm0.013$}} & \makecell{$0.231$\\\textcolor{gray}{$\pm0.009$}} & \makecell{$0.168$\\\textcolor{gray}{$\pm0.004$}} & \makecell{$0.003$\\\textcolor{gray}{$\pm0.000$}} & \makecell{$0.002$\\\textcolor{gray}{$\pm0.000$}} & — \\
    \makecell[l]{AAT-BiRQA\\(ours)} & \makecell{$0.560$\\\textcolor{gray}{$\pm0.016$}} & \makecell{$0.457$\\\textcolor{gray}{$\pm0.013$}} & \makecell{$0.278$\\\textcolor{gray}{$\pm0.013$}} & \makecell{$0.268$\\\textcolor{gray}{$\pm0.011$}} & \makecell{$0.224$\\\textcolor{gray}{$\pm0.008$}} & \makecell{$0.161$\\\textcolor{gray}{$\pm0.003$}} & \makecell{$-0.004$\\\textcolor{gray}{$\pm0.001$}} & \makecell{$-0.005$\\\textcolor{gray}{$\pm0.001$}} & \makecell{$-0.007$\\\textcolor{gray}{$\pm0.001$}} \\
    \bottomrule
    \end{tabular}
    }
\label{tab:delta_srocc}
\end{table*}

\newpage

\section{Comparison with Purification Defenses}

Table \ref{tab:pur_defenses} compares the proposed AAT technique with adversarial purification methods. These methods do not modify the IQA model itself. Instead they are preprocess input images. We compare AAT-BiRQA against basic preprocessing techniques such as Random Flip, Random Rotate and the specialized DiffPure defense method (\cite{diffpure}). Results show that AAT-BiRQA outperforms all other purification methods, except DiffPure on SquareAttack. This likely reflects that SquareAttack has the least similar perturbations to PGD-10 used during adversarial training.

\begin{table*}[!h]
    \centering
    \caption{Accuracy on 2AFC BAPPS test set for AAT-BiRQA compared to some adversarial purification methods. The best results are bolded. PGD-10 with $\epsilon=8/255$ was used during training of AAT-BiRQA.}
    \resizebox{\linewidth}{!}{
    \begin{tabular}{l|cccccc|cc}
        \toprule
        Model & & \multicolumn{5}{c}{White-Box Attacks} & \multicolumn{2}{c}{Black-Box Attacks} \\
         & Clean & FGSM & C\&W & AutoAttack & FACPA & Zhang et al. & SquareAttack & Parsimonious \\
        \midrule
        base BiRQA           & \textbf{0.794} & 0.405 & 0.350 &
        0.376 & 0.521 & 0.462 & 
        0.571 & 0.574 \\
        base BIRQA + Random Flip   & 0.751 & 0.471 & 0.467 &
        0.422 & 0.569 & 0.570 & 
        0.606 & 0.608 \\
        base BIRQA + Random Rotate & 0.720 & 0.446 & 0.431 &
        0.390 & 0.534 & 0.512 & 
        0.541 & 0.553 \\
        base BIRQA + DiffPure      & 0.741 & 0.522 & 0.498 &
        0.452 & 0.619 & 0.603 & 
        \textbf{0.712} & 0.698 \\
        AAT-BiRQA (ours)     & 0.782 & \textbf{0.594} & \textbf{0.561} &
        \textbf{0.580} & \textbf{0.637} & \textbf{0.610} & 
        0.690 & \textbf{0.703} \\
        \bottomrule
    \end{tabular}
    }
    \label{tab:pur_defenses}
\end{table*}

\section{List of Parameters}
\label{app:params}

The complete set of hyperparameters for both clean and adversarial training is provided in Table~\ref{tab:params}. For standard training, we use a regression-based objective that balances MSE and PLCC. In the adversarial setting, our anchored fine-tuning strategy integrates PGD-based attacks into the training loop and jointly optimizes clean and ranking losses. For the adversarial attacks and defenses, we have always used the default parameters except for $\epsilon$, whose value is explicitly stated.

\begin{table*}[!th]
  \centering
  \caption{List of hyper-parameters and description of experimental set-up for BiRQA.}
  \resizebox{\linewidth}{!}{
  \begin{tabular}{cc|cc}
    \toprule
    \textbf{Parameter} & \textbf{Value} & \textbf{Parameter} & \textbf{Value} \\
    \midrule
    \multicolumn{2}{c|}{\textit{Vanilla (clean) training}} & \multicolumn{2}{c}{\textit{Anchored Adversarial Training (AAT)}} \\
        Optimizer                            & Adam & Inner attacker                       & PGD-10 \\
        Adam $(\beta_1,\beta_2)$             & 0.9/0.999 & PGD step size                        & $2/255$      \\
        Batch size                           & 32 & PGD norm type                    & $\ell_\infty$       \\
        Epochs                  & 2500 & Perturbation budget $\epsilon$       & train: $8/255$; test: $\{2,4,8,10\}/255$ \\
        Learning rate                          & $10^{-4}$  & Anchor spacing $\lambda$             & 0.5 \\
        Loss function                        & \makecell{$\mathcal{L}_{clean}=\alpha_1\text{MSE}-(1-\alpha_1)\text{PLCC}$\\$\alpha_1=0.7$} & AAT loss                         & \makecell{$\alpha_2\mathcal{L}_{anchor}+(1-\alpha_2)\mathcal{L}_{clean}$\\$\alpha_2=0.5$} \\
    \bottomrule
  \end{tabular}
  }
  \label{tab:params}
\end{table*}

\end{document}